\documentclass[11pt]{article}

\usepackage[preprint]{acl}

\usepackage{times}
\usepackage{latexsym}

\usepackage[T1]{fontenc}

\usepackage[utf8]{inputenc}

\usepackage{microtype}

\usepackage{inconsolata}

\usepackage[dvipsnames]{xcolor}
\usepackage{graphicx}
\usepackage[most]{tcolorbox}

\usepackage{amsmath}
\usepackage{bm}
\usepackage{bbm}
\usepackage{amssymb}
\usepackage{amsthm}
\usepackage{stmaryrd}

\usepackage{booktabs}
\usepackage{colortbl}
\newcolumntype{a}{>{\columncolor{Gray!20}}c}

\usepackage{tikz}
\usepackage{pgfplots}
\pgfplotsset{compat=1.18}
\usetikzlibrary{math,calc,positioning,intersections,fit,matrix,datavisualization}
\usetikzlibrary{decorations.pathreplacing,decorations.markings,arrows.meta}
\usetikzlibrary{shapes.geometric}
\usetikzlibrary{pgfplots.groupplots}
\tikzset{>=latex}
\usetikzlibrary{shapes.misc}

\usepackage[noend]{algpseudocode}

\usepackage{amsmath,amsfonts,bm}

\def\eqref#1{equation~\ref{#1}}

\def\1{\bm{1}}

\def\rr{{\textnormal{r}}}

\def\rx{{\textnormal{x}}}

\def\rvx{{\mathbf{x}}}

\def\vp{{\bm{p}}}
\def\vq{{\bm{q}}}

\def\vs{{\bm{s}}}

\def\evp{{p}}
\def\evq{{q}}

\def\evs{{s}}

\DeclareMathAlphabet{\mathsfit}{\encodingdefault}{\sfdefault}{m}{sl}
\SetMathAlphabet{\mathsfit}{bold}{\encodingdefault}{\sfdefault}{bx}{n}

\newtheorem{proposition}{Proposition}

\title{On the Rejection Criterion for Proxy-based Test-time Alignment}

\author{
    Ayoub Hammal\textsuperscript{1} \qquad Pierre Zweigenbaum\textsuperscript{1} \qquad Caio Corro\textsuperscript{2} 
    \\
    \textsuperscript{1}Université Paris-Saclay, CNRS, LISN
    \\
    \textsuperscript{2}INSA Rennes, IRISA, CNRS, Université de Rennes
    \\
    \texttt{\{ayoub.hammal,pz\}@lisn.fr} \qquad\texttt{caio.corro@irisa.fr}
}

\begin{document}
\maketitle
\begin{abstract}
Recent work proposed test-time alignment methods that rely on a small aligned model as a proxy that guides the generation of a larger base (unaligned) model.
The implicit reward approach skews the large model's distribution \cite{mitchell2024emulator},
whereas the nudging approach defers the generation of the next token to the small aligned model when the large base model is unconfident about its outcome \cite{fei2025nudging,hammal-etal-2026-kad}.
In this work, we first show that both approaches can be reduced to sampling from similar graphical models, differing only in the definition of a rejection criterion (or distribution).
Moreover, we argue that the confidence criterion is ill-motivated due to linguistic phenomena like ambiguous phrasing.
We propose a novel rejection criterion based on a \emph{conservative confidence bet}.
Experimentally, our novel approach outperforms previous work on several datasets.
\end{abstract}

\section{Introduction}

Recently, large language models (LLMs) have been the main focus of the natural language processing research community.
Training an LLM is a costly and difficult process, which can be summarized into the following three phases:
\begin{enumerate}
    \item \emph{pretraining}, where the language model is trained on a large collection of texts;
    \item \emph{supervised fine-tuning} (SFT), where the model is trained on selected input/output pairs, for example to learn to follow instructions;
    \item \emph{preference learning}, which encompasses a diverse set of extra learning steps including reinforcement learning from human feedback (RLHF), direct preference optimization (DPO), and reinforcement learning from verifiable rewards (RLVR).
\end{enumerate}
Phases~2 and~3, which are often referred to as \emph{alignment} phases, are essential to bias LLMs so that they follow expected (human) preferences, formatting or instructions, and more generally any downstream task requirements.
Unfortunately, this procedure is costly and scales poorly with the model size \cite{lambert2025tulu}.

As such, test-time alignment has been proposed as an alternative to the alignment phases.
In this setting, the distribution of a base model is biased directly during the generation process.
A first approach is to rely on an explicit reward.
If the reward is partial (or unidirectional, \emph{i.e.}, it can be applied on unfinished generations), it is then possible to directly select next token candidates by using the LLM and the reward in a pipeline \cite[][\emph{inter alia}]{deng2023rewardaugmenteddec,cao2023systematic,khanov2024args}.
However, depending on the reward architecture, partial rewards can be costly to use as they require one forward pass per next token candidate \cite{deng2023rewardaugmenteddec} or are of limited expressivity \cite{troshin2025lowrank}, and they are difficult to train \cite{rashid2025a}.
Otherwise, for rewards defined on finished generations only,
previous work proposed to rely on generation reranking \cite{gao2023scalinglaws,lietal2023making} and Markov-chain Monte Carlo sampling \cite{faria2025sampledontsearchrethinking}.
An important downside of these methods is that they require drawing many samples from the base LLM, which means the generation process is very slow.

A second approach is to bias a large base LLM with a small aligned LLM, which we call proxy-based test-time alignment in the following.
A popular approach is to extract an \emph{implicit partial reward} from the small aligned model,
and use it to ``emulate'' the distribution of the (unknown) large aligned LLM from its base version \cite{mitchell2024emulator,liu2024tuningbyproxy,li2025rewardshifted}.
However, building the implicit reward requires access to both the base and aligned versions of the small model.
Recently, \citet{fei2025nudging} proposed the \emph{nudging} method: at each generation step,
they either generate the next token from the base large model if it has a low min-entropy, or otherwise from the aligned small model.
\citet{hammal-etal-2026-kad} extended nudging to token-specific decision, \emph{i.e.}, they propose a framework in which the decision to defer generation to the aligned small model depends on the token sampled from the large one.
However, their approach is based on the same intuition: the generation should be deferred to the aligned model when the base model is \emph{unconfident about its outcome}.

\paragraph{Contributions.}
In this work, we introduce a new graphical model for proxy-based test-time alignment that can be parametrized through a \emph{rejection criterion} (or distribution).
We show that both implicit reward and nudging approaches can be modeled in this framework, showing that they are related, although very different at first glance.
Moreover, we argue that the confidence criterion of nudging is ill-motivated due to linguistic phenomena like ambiguous phrasing.
We propose instead a novel rejection criterion based on a \emph{conservative confidence bet}.
Experimentally, our novel approach outperforms previous work on several datasets.
Code to reproduce experiments is publicly available.\footnote{\url{https://github.com/ayoubhammal/knapsack-approximation-deferral}}

\paragraph{Notations.}
We write $p$ and $q$ the large and small language models, respectively.
We will distinguish the aligned version of each model as $p^\ast$ and $q^\ast$.
The token vocabulary is the set $V = \{1, \dots, |V|\}$.
To simplify notation, we drop dependence on the generation context, and we note the next token probability vector with bold letters such as $\vp$, indexed by the vocabulary, that is the probability of token $v \in V$ is written $\evp_v$.
Note that $\vp \in \triangle(|V|)$ where $\triangle(k)$ is the $(k\!-\!1)$-dimensional simplex.
Finally, we write Iverson brackets as $\llbracket \cdot \rrbracket$, which evaluates to 1 if the inner condition is true and 0 otherwise.

\section{Rejection-based Test-time Alignment}

\begin{figure}
    \centering
    \begin{tikzpicture}[
  node/.style = {
    draw,
    circle,
    inner sep=0pt,
    minimum size=20pt
    },
  >=Stealth, 
  every edge/.style={draw, thick}
  ]

  \node[node] (xprime) {$\bar\rx$};
  \node[node, right=10mm of xprime] (a) {$\rr$};
  \node[node, fill=gray!30, right=10mm of a] (x) {$\rx$};

  \draw[->] (xprime) edge (a);
  \draw[->] (a) edge (x);
  \path[->] (xprime) edge[bend left=30] (x);
\end{tikzpicture}
\caption{Probabilistic graphical model of the distribution $\pi$ for proxy-based test-time alignment.}
\label{fig:pgm}
\end{figure}
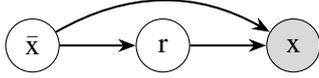

In this section,
we show that previous work on proxy-based test-time alignment can be modeled as sampling from a \emph{probabilistic graphical model} (PGM),
where a rejection criterion decides whether a sample from the base large model $\vp$ should be used as the next token or not.

\paragraph{Graphical model.}
Let $\pi$ be a PGM on the following random variables (RV):
\begin{itemize}
    \item $\bar\rx$, a latent RV on the vocabulary;
    \item $\rr$, a latent binary RV whose samples indicate whether the $\bar\rx$ sample should be rejected ($\rr = 1$) or not ($\rr = 0$);
    \item $\rx$, the observed RV over the vocabulary  (\emph{i.e.}, the generated token) .
\end{itemize}
The generative story is defined as follows:
\begin{enumerate}
    \item Sample a draft token $w$ from $\pi(\bar\rx)$;
    \item Sample a decision from the rejection distribution given the token $w$ sampled before, \emph{i.e.}, $r \sim \pi(\rr~|~\bar\rx = w)$;
    \item Sample a final token $v \sim \pi(\rx~|~\rr = r, \bar\rx = w)$,
    where we either copy the draft token if $\rr = 0$ or sample a new token otherwise.
\end{enumerate}
This PGM is depicted in Figure \ref{fig:pgm}, and the resulting distribution has the following form:
\begin{align*}
    \pi(\rx \!=\! v) =
    & \sum_{w \in V} \pi(\bar\rx\!\!=\!\!w) \begin{array}[t]{l}
        \pi(\rr\!=\!0~|~\bar\rx\!=\!w) \\
        \null\hspace{-0.6cm}\times  \pi(\rx\!=\!v~|~\rr\!=\!0,\bar\rx\!=\!w)
    \end{array}
    \nonumber \\
    & +\hspace{-0.2em}\sum_{w \in V} \pi(\bar\rx\!=\!w)
    \begin{array}[t]{l}
        \pi(\rr\!=\!1~|~\bar\rx\!=\!w)\\
        \null\hspace{-0.6cm}\times \pi(\rx\!=\!v~|~\rr\!=\!1,\bar\rx\!=\!w).
    \end{array}
\end{align*}
The latent token distribution is set to the large base model distribution:
\begin{align*}
    \pi(\bar\rvx = w) &= \evp_w,
\intertext{and we sample from the aligned small model if the draft sample is rejected, that is:}
    \pi(\rx = v~|~\rr = r,\bar\rx = w) &= \begin{cases}
        \llbracket v = w \rrbracket& \null \hspace{-0.4em}\text{if~} r = 0,\\
        \evq^\ast_v& \null \hspace{-0.4em}\text{otherwise.}
    \end{cases}
\end{align*}
As such, we are left with only defining the rejection distribution $\pi(\rr~|~\bar\rvx)$, that is, defining the Bernoulli parameters $\mu_v \in [0, 1]$:
\[
\pi(\rr = 1~|~\bar\rvx = v) = \mu_v.
\]

\paragraph{Nudging and dual KAD.}
On the one hand, the nudging method of \citet{fei2025nudging} corresponds to using the following rejection distribution:
\[
    \pi(\rr=1~|~\bar\rx=v) = \llbracket \max_w~\evp_w < \lambda \rrbracket,
\]
where $\lambda \in [0, 1]$ is a thresholding hyperparameter on the distribution confidence.
Interestingly, this rejection criterion does not depend on the initial draft token $v$, that is:
\[
    \pi(\rr=1~|~\bar\rx=v) = \pi(\rr=1).
\]
This means the rejection decision is a distribution-level decision in this case, \emph{i.e.}, we reject the sampling distribution $\vp$, not the exact sampled token.

On the other hand, the dual version of knapsack approximate deferral \citep[KAD,][]{hammal-etal-2026-kad} corresponds to using the following distribution:
\[
    \pi(\rr=1~|~\bar\rx=v) = \llbracket \evp_v < \lambda \rrbracket.
\]
In this case, the token $v$ is rejected if $\vp$ is not confident enough in the probability of $v$.

Note, however, that both of these rejection criteria are deterministic, that is:
\[
\pi(\rr = 1~|~\bar\rx=v) \in \{0, 1\},\quad \forall v \in V.
\]

\paragraph{Implicit reward.}
Implicit reward alignment \citep{mitchell2024emulator,liu2024tuningbyproxy,li2025rewardshifted} relies on extracting a token-level reward component as the ratio $q^\ast / q$, which represents the knowledge learned by $q^\ast$ during alignment.
This partial reward is then used to bias $p$ with the extracted alignment information, constructing a new aligned distribution $\vs$ defined as follows:
\[
    \evs_v = \evp_v \frac{\evq^\ast_v}{\evq_v Z},
    \quad
    \text{with}
    \quad
    Z = \sum_w \evp_w \frac{\evq^\ast_w}{\evq_w},
\]
where $Z$ is for normalization, \emph{i.e.}, to ensure that $\vs \in \triangle(|V|)$.
Although this approach seems different from the nudging and KAD,
the following proposition shows it can also be reduced to defining a rejection criterion in our PGM.

\begin{proposition} \label{prop:implicit_reward_condition}
    Let $\vp$, $\vq$ and $\vq^\ast$ be probability distributions over the vocabulary such that there exists $\alpha \in [0, 1]$ with
    \[
        \forall v \in V:\quad \frac{\evs_v - \evp_v}{\evq^\ast_v} \leq \alpha \leq \frac{\evs_v}{\evq^\ast_v},
    \]
    or, alternatively,
            \begin{equation}
        \forall v \in V:\quad \evq^\ast_v \alpha \leq \evs_v \leq \evp_v + \evq^\ast_v \alpha.
            \label{eq:alpha_condition}
    \end{equation}
    Then, the rejection distribution defined as
    \[
        \forall v \in V:\quad \pi(\rr=1~|~\bar\rx=v) = \frac{\evp_v + \evq^\ast_v \alpha - \evs_v}{\evp_v},
    \]
    gives the following PGM distribution:
    \[
        \forall v \in V: \quad \pi(\bar\rx = v) = \evs_v.
    \]
\end{proposition}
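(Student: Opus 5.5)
The conclusion should be read as a statement about the observed variable, $\pi(\rx = v) = \evs_v$. The latent marginal $\pi(\bar\rx = v)$ is fixed to $\evp_v$ by construction, so that must be what is intended. The plan is to substitute the proposed Bernoulli parameters $\mu_v = (\evp_v + \evq^\ast_v\alpha - \evs_v)/\evp_v$ into the closed-form mixture for $\pi(\rx = v)$ given earlier, and simplify.

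\textbf{Step 1: well-definedness.} I will check that each $\mu_v$ is a valid Bernoulli parameter, using the two halves of condition~(\ref{eq:alpha_condition}):
\begin{itemize}
  \item $\mu_v \ge 0$ is equivalent to $\evs_v \le \evp_v + \evq^\ast_v\alpha$;
  \item $\mu_v \le 1$ is equivalent to $\evq^\ast_v\alpha \le \evs_v$.
\end{itemize}
Both equivalences use $\evp_v > 0$.

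If $\evp_v = 0$, then $\evs_v = 0$ by definition of $\vs$. The condition then forces $\evq^\ast_v\alpha = 0$, and $\mu_v$ may be set arbitrarily, since it is multiplied by $\evp_v = 0$ everywhere below. The same convention, $\evq_v>0$ or $0/0=0$, is implicitly needed for $\vs$ itself to be defined.

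\textbf{Step 2: simplify the mixture.} Plugging the PGM components into the two-sum expression gives
\[
\pi(\rx = v) = \evp_v(1-\mu_v) + \evq^\ast_v \sum_{w\in V} \evp_w \mu_w ,
\]
because the $\rr=0$ branch contributes only the term $w=v$ through $\llbracket v = w\rrbracket$. The first term is $\evp_v - (\evp_v + \evq^\ast_v\alpha - \evs_v) = \evs_v - \evq^\ast_v\alpha$.

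\textbf{Step 3: the total rejection mass equals $\alpha$.} This is the key identity:
\[
\sum_w \evp_w\mu_w = \sum_w \bigl(\evp_w + \evq^\ast_w\alpha - \evs_w\bigr) = 1 + \alpha - 1 = \alpha .
\]
It uses only that $\vp$, $\vq^\ast$ and $\vs$ all lie in the simplex. Substituting into Step 2 gives
\[
\pi(\rx=v) = \evs_v - \evq^\ast_v\alpha + \evq^\ast_v\alpha = \evs_v .
\]

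The algebra is elementary. The only delicate point is Step 1: showing the $\mu_v$ lie in $[0,1]$, and handling zero-probability tokens so that the division by $\evp_v$ is harmless. I expect that to be the main obstacle, and I would settle it with the convention stated above.
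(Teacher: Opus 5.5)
Your proposal is correct and matches the paper's proof step for step: check $\mu_v\in[0,1]$ from the two halves of the enclosing condition, write $\pi(\rx=v)=\evp_v(1-\mu_v)+\evq^\ast_v\sum_w\evp_w\mu_w$, and use $\sum_w\evp_w\mu_w=\alpha$ to conclude $\pi(\rx=v)=\evs_v$. Reading the conclusion as a statement about $\rx$ rather than $\bar\rx$ is what the paper's proof actually establishes, and your handling of tokens with $\evp_v=0$ is a small rigor improvement the paper omits.
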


\noindent
Proof is given in Appendix~\ref{app:proof_implicit_reward_condition}.
Proposition \ref{prop:implicit_reward_condition} gives a sufficient condition on $\vp$, $\vq$ and $\vq^\ast$ so that the implicit reward distribution $\vs$ can be cast in our PGM, which reduces to the ``enclosing'' constraint in Equation~\ref{eq:alpha_condition}.
Note that this proposition suggests that, in some cases, our PGM cannot represent the distribution $\vs$.

Figure \ref{fig:proxy_examples} shows distributions that either satisfy or do not satisfy the condition.

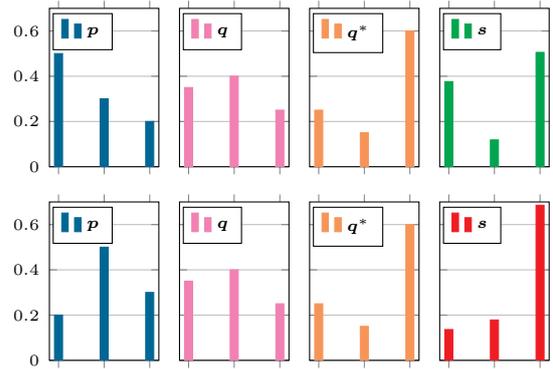
\begin{figure}
    \centering
    \begin{tikzpicture}
    \pgfplotsset{
        every axis/.style={
            ybar,
            tiny,
            yminorgrids,
            ymajorgrids,
            bar width=3.0pt,
            bar shift auto,
            x=1.2cm,
            y=3.0cm,
            xticklabels=\empty,
            legend pos=north west,
            legend columns=1,
            ymin=0,
            ymax=0.7,
            axis background/.style={fill=white},
        }
    }
    \begin{axis}[
        name=plot1,
    ]
        \addplot+[MidnightBlue] coordinates {
            (0.0, 0.5)
            (0.5, 0.3)
            (1.0, 0.2)
        };
        \addlegendentry{$\vp$};
    \end{axis}
    \begin{axis}[
        at=(plot1.right of north east),
        anchor=left of north west,
        name=plot2,
        yticklabels=\empty,
    ]
        \addplot+[CarnationPink] coordinates {
            (0.0, 0.35)
            (0.5, 0.4)
            (1.0, 0.25)
        };
        \addlegendentry{$\vq$};
    \end{axis}
    \begin{axis}[
        at=(plot2.right of north east),
        anchor=left of north west,
        name=plot3,
        yticklabels=\empty,
    ]
        \addplot+[Peach] coordinates {
            (0.0, 0.25)
            (0.5, 0.15)
            (1.0, 0.6)
        };
        \addlegendentry{$\vq^\ast$};
    \end{axis}
    \begin{axis}[
        at=(plot3.right of north east),
        anchor=left of north west,
        name=plot4,
        yticklabels=\empty,
    ]
        \addplot+[Green] coordinates {
            (0.0, 0.37608123)
            (0.5, 0.11846559)
            (1.0, 0.50545318)
        };
        \addlegendentry{$\vs$};
    \end{axis}
    
    \begin{axis}[
        at=(plot1.below south west),
        anchor=above north west,
        name=plot5,
    ]
        \addplot+[MidnightBlue] coordinates {
            (0.0, 0.2)
            (0.5, 0.5)
            (1.0, 0.3)
        };
        \addlegendentry{$\vp$};
    \end{axis}
    \begin{axis}[
        at=(plot5.right of north east),
        anchor=left of north west,
        name=plot6,
        yticklabels=\empty,
    ]
        \addplot+[CarnationPink] coordinates {
            (0.0, 0.35)
            (0.5, 0.4)
            (1.0, 0.25)
        };
        \addlegendentry{$\vq$};
    \end{axis}
    \begin{axis}[
        at=(plot6.right of north east),
        anchor=left of north west,
        name=plot7,
        yticklabels=\empty,
    ]
        \addplot+[Peach] coordinates {
            (0.0, 0.25)
            (0.5, 0.15)
            (1.0, 0.6)
        };
        \addlegendentry{$\vq^\ast$};
    \end{axis}
    \begin{axis}[
        at=(plot7.right of north east),
        anchor=left of north west,
        name=plot8,
        yticklabels=\empty,
    ]
        \addplot+[Red] coordinates {
            (0.0, 0.13600816)
            (0.5, 0.17851071)
            (1.0, 0.68548113)
        };
        \addlegendentry{$\vs$};
    \end{axis}
\end{tikzpicture}
    \caption{Two examples of implicit reward mixture distributions.
    The upper row shows an example of $\vp$, $\vq$ and $\vq^\ast$ that produce a mixture $\vs$ for which an $\alpha$ as defined in Proposition~\ref{prop:implicit_reward_condition} exists.
    The lower row shows an example of the opposite case, where no such $\alpha$ exists.
    Note that both cases differ only in distribution $\vp$, which is ``unaligned'', \emph{i.e.}, the most probable token in $\vp$ is different from the most probable token in $\vq^\ast$.
    }
    \label{fig:proxy_examples}
\end{figure}
\section{What is a Good Rejection Criterion?}

\setlength{\tabcolsep}{2.5pt}
\begin{table*}[!ht]
    \small
    \centering
    \begin{tabular}{l ccc cc a c ccc cc a}
        \toprule
        & \textbf{GMS8K} & \textbf{MATH} & \textbf{SVAMP} & \textbf{ARC} & \textbf{CSQA} & \textbf{Avg.} && \textbf{GMS8K} & \textbf{MATH} & \textbf{SVAMP} & \textbf{ARC} & \textbf{CSQA} & \textbf{Avg.} \\
        \cmidrule(lr){2-7}  \cmidrule(lr){9-14}
        & \multicolumn{6}{c}{OLMo 2 (1B and 13B)} & & \multicolumn{6}{c}{Qwen 3 (1.7B and 14B)} \\
        \midrule
        \multicolumn{14}{l}{\textbf{Building blocks}}\\
        \midrule
        $\vp$               & $54.5$ & $09.4$ & $57.6$ & $29.6$ & $19.4$ & $34.1$ && $75.5$ & $51.8$ & $80.0$ & $86.6$ & $76.9$ & $74.1$ \\
        $\vq^\ast$          & $62.5$ & $16.4$ & $70.3$ & $43.8$ & $48.4$ & $48.2$ && $75.3$ & $53.0$ & $86.6$ & $82.9$ & $68.7$ & $73.3$ \\
        
        \midrule
        \multicolumn{14}{l}{\textbf{Previous works}}\\
        \midrule

        Imp.\ reward        & $58.4$ & $18.2$ & $73.0$ & $\mathbf{63.3}$ & $\underline{55.8}$ & $53.7$ && $80.7$ & $60.6$ & $89.0$ & $88.9$ & $78.1$ & $79.4$ \\

        \multicolumn{4}{l}{$\hookrightarrow$ Nudging \cite{fei2025nudging}}&&&&&&&&&&\\[0.1em]
        $\lambda\!=\!0.3$   & $60.3$ & $13.8$ & $67.3$ & $53.7$ & $33.4$ & $45.7$ && $78.1$ & $52.8$ & $85.0$ & $88.8$ & $78.5$ & $76.6$ \\
        $\lambda\!=\!0.4$   & $61.9$ & $18.8$ & $66.6$ & $57.8$ & $46.3$ & $50.2$ && $79.5$ & $57.8$ & $87.3$ & $89.6$ & $79.3$ & $78.7$ \\[0.3em]

        \multicolumn{4}{l}{$\hookrightarrow$ Dual KAD \cite{hammal-etal-2026-kad}}&&&&&&&&&&\\[0.1em]
        $\lambda = 0.3$     & $69.5$ & $22.8$ & $76.6$ & $61.6$ & $52.3$ & $56.5$ && $81.3$ & $59.6$ & $89.0$ & $\underline{91.4}$ & $\mathbf{81.3}$ & $\mathbf{80.5}$ \\
        $\lambda = 0.4$     & $\mathbf{72.3}$ & $23.4$ & $75.3$ & $61.9$ & $55.6$ & $57.7$ && $81.7$ & $60.6$ & $87.3$ & $\mathbf{91.5}$ & $\underline{80.7}$ & $\underline{80.3}$ \\
        
        \midrule
        \multicolumn{14}{l}{\textbf{Our work: confidence bet}}\\
        \midrule

        $\lambda = 0$       & $71.1$ & $21.4$ & $77.6$ & $56.6$ & $53.4$ & $56.0$ && $78.3$ & $\mathbf{61.8}$ & $\mathbf{90.3}$ & $87.3$ & $74.7$ & $78.4$ \\
        $\lambda = 0.1$     & $70.5$ & $\underline{24.0}$ & $\mathbf{79.3}$ & $59.5$ & $\mathbf{58.7}$ & $\underline{58.4}$ && $\mathbf{82.4}$ & $59.8$ & $89.0$ & $89.5$ & $79.2$ & $79.9$ \\
        $\lambda = 0.2$     & $\underline{71.7}$ & $\mathbf{26.4}$ & $\underline{79.0}$ & $\underline{62.6}$ & $54.9$ & $\mathbf{58.9}$ && $\underline{82.1}$ & $\underline{61.6}$ & $\underline{89.3}$ & $90.5$ & $79.3$ & $\mathbf{80.5}$ \\

        \midrule
        \multicolumn{14}{l}{\textbf{For reference}}\\
        \midrule
        $\vp^\ast$          & $84.3$ & $39.6$ & $87.6$ & $82.5$ & $76.9$ & $74.1$ && $82.4$ & $64.0$ & $88.3$ & $93.8$ & $83.1$ & $82.3$ \\
        $\vq$               & $14.4$ & $01.0$ & $17.6$ & $19.6$ & $15.8$ & $13.6$ && $21.9$ & $12.2$ & $25.3$ & $47.1$ & $29.5$ & $27.2$ \\

        \bottomrule
    \end{tabular}
    \caption{Accuracies given by different models and methods. Best results on each column are in bold, second best are underlined.}
    \label{tab:accuracy}
\end{table*}

Nudging and KAD rely only on $p$ to take the rejection decision.
This is based on the assumption that the fallback distribution is an expert distribution whose error can be ignored \citep{chow1970errorreject}.
However, in practice, $q^\ast$ can underperform $p$.
Considering information from $q^\ast$ can yield a richer rejection decision.

Moreover, relying on the absolute confidence of $p$ alone is a bad criterion to assess the quality of a generation \citep{dinh2025generativeunderconfident}.
In many cases, probability mass can be split across several \emph{correct} tokens, which reduces the overall probability assigned to each of them.
Although this appears to indicate the model is unconfident, it actually shows ambiguous phrasing and related linguistic phenomena.
A naive example can be seen in splitting the model probability mass between ``\texttt{like}'' and ``\texttt{such}'', which are both valid introductions of examples (``\texttt{frameworks like Pytorch}'' vs ``\texttt{frameworks such as Pytorch}''). As both those continuations are valid, a uniform probability split would result in $\leq 0.5$ probability mass for each of these options, \emph{e.g.}, $p_\mathrm{that} = 0.35, p_\mathrm{like} = 0.35$ and the rest of the mass is spread on other tokens, all of them having very low probabilities.
This case corresponds to natural ambiguity in the language, and not to a lack of confidence of the model \emph{per se}.
As such, we argue that confidence-based rejection criteria are ill-motivated.

Instead of using fixed thresholds, as in nudging and KAD's dual rule, we can intuitively compare the initial draft from $p$ to the best possible guess from $q^\ast$.
In other words, we introduce a baseline to which we compare the confidence of the prediction from $p$,
and we additionally avoid rejecting the initial draft from $p$ if we have guarantees that $q^\ast$ cannot improve it.

\paragraph{Conservative confidence bet.}
A rational actor drawing an initial draft $v$ from $p$ would compare the confidence of this draft $\evp_v$ to the best confidence that $q^\ast$ can produce, that is, to $\max_w \evq^\ast_w$.
A decision to keep this draft can be taken safely if its confidence $\evp_v$ is larger than that of the best possible sample from $q^\ast$.
In the opposite case, a decision to defer is taken instead.
This results in a deferral rule that we call \emph{conservative confidence bet}, which is defined as follows:
\[
    \pi(\rr = 1~|~\bar\rvx = v) = \llbracket \evp_v < \max_w \evq^*_w - \lambda \rrbracket,
\]
where $\lambda \in [0, 1]$ is a margin hyperparameter.
Note that this rule mimics deferral rules previously proposed for cascading \cite{narasimhan2025fastercascades}.

\section{Experiments} \label{sec:experiments}

\paragraph{Datasets and models.}
We use the same experimental setup as \citet{hammal-etal-2026-kad}.
We select three \textit{math reasoning} datasets, GSM8K \citep{cobbe2021trainingverifierssolvemath}, MATH500 \citep{lightman2023letsverifystepstep} and SVAMP \citep{patel2021svamp}, and two \textit{commonsense reasoning} datasets, ARC-Challenge \citep{allenai2018arc}, CommonsenseQA \citep{talmor2019commonsenseqa} for our evaluation.\footnote{Although previous works also evaluate on TruthfulQA \citep{lin2022truthfulqa}, we did not include this dataset as the gold answer is always A in the dataset, see \url{https://huggingface.co/datasets/truthfulqa/truthful_qa}.}
Examples of prompts are given in Appendix~\ref{app:dataset_examples}.

We evaluate our deferral rule on two LLM families: OLMo~2 \citep{olmo20252olmo2furious} and Qwen~3 \citep{yang2025qwen3technicalreport}, which are open and closed source, respectively.
These families provide different model sizes and checkpoints from which we select two contrasting sizes for our experiments.

We use a temperature of $0.7$ as recommended to isolate the benefits of our method.
As in previous work, to select the margin hyperparameter $\lambda$ for our new rule, we evaluate a range of values on a small data subset.
We finally select $\{0, 0.1, 0.2\}$ for our final evaluation.

\subsection{Results Discussion}

Results are presented in Table \ref{tab:accuracy}.
Our conservative confidence bet rule produces results within the expected interval, that is, improving the performance of both building blocks $p$ and $q^\ast$ and just under the performance level of the target aligned large model $p^\ast$, and this holds across all models and tasks.
Compared to KAD's dual approximation rules, our rule with $\lambda = 0.2$ improves the mixture accuracy on the majority of tasks for OLMo~2, with large improvements in the case of the MATH500 task, reaching, on average, higher accuracy than the other baselines.

For Qwen~3, although our rule achieves higher accuracy on mathematical reasoning tasks, it falls short of KAD's dual approximation rules on commonsense reasoning tasks.
This mitigated performance improvement results in comparable average performance scores with our baselines.
This is mainly because the performance of the Qwen~3 base model on the selected tasks is close to that of its aligned version: $71.4$ compared to $80.2$, whereas OLMo~2 has a $37.4$ average accuracy-point gap between $p$ and $p^\ast$.

\section{Conclusion} \label{sec:conclusion}
In this work, we introduced a new probabilistic graphical model for the proxy-based test-time alignment problem.
This enables a better understanding of previous work and allows us to derive a new rejection criterion that leverages additional information from the proxy model to assess the confidence of the large base model.

\section*{Acknowledgments}

This work is supported by the SEMIAMOR (CE23-2023-0005) and InExtenso (ANR-23-IAS1-0004) project grants given by the French National Research Agency (ANR).
This work was granted access to the HPC resources of IDRIS under the allocation 2024-AD011015801 made by GENCI.

\section*{Limitations}
As per previous work, hyperparameter selection requires tuning on a separate development set.
We show however that even the absence of our margin hyperparameter ($\mu = 0$) can lead to very competitive performances.

\bibliography{custom}

@inproceedings{khanov2024args,
  title         = {{ARGS}: Alignment as Reward-Guided Search},
  author        = {Maxim Khanov and Jirayu Burapacheep and Yixuan Li},
  booktitle     = {The Twelfth International Conference on Learning Representations},
  year          = {2024},
  url           = {https://openreview.net/forum?id=shgx0eqdw6}
}

@inproceedings{deng2023rewardaugmenteddec,
  title         = "Reward-Augmented Decoding: Efficient Controlled Text Generation With a Unidirectional Reward Model",
  author        = "Deng, Haikang  and Raffel, Colin",
  editor        = "Bouamor, Houda  and Pino, Juan  and Bali, Kalika",
  booktitle     = "Proceedings of the 2023 Conference on Empirical Methods in Natural Language Processing",
  month         = dec,
  year          = "2023",
  address       = "Singapore",
  publisher     = "Association for Computational Linguistics",
  url           = "https://aclanthology.org/2024.emnlp-main.721/",
  doi           = "10.18653/v1/2023.emnlp-main.721",
  pages         = "11781--11791"
}

@article{troshin2025lowrank,
title={On the Low-Rank Parametrization of Reward Models for Controlled Language Generation},
author={Sergey Troshin and Vlad Niculae and Antske Fokkens},
journal={Transactions on Machine Learning Research},
issn={2835-8856},
year={2025},
url={https://openreview.net/forum?id=cjRsEGLT8B},
note={}
}

@inproceedings{cao2023systematic,
title={Systematic Rectification of Language Models via Dead-end Analysis},
author={Meng Cao and Mehdi Fatemi and Jackie CK Cheung and Samira Shabanian},
booktitle={The Eleventh International Conference on Learning Representations },
year={2023},
url={https://openreview.net/forum?id=k8_yVW3Wqln}
}

@InProceedings{gao2023scalinglaws,
title = 	 {Scaling Laws for Reward Model Overoptimization},
author =       {Gao, Leo and Schulman, John and Hilton, Jacob},
booktitle = 	 {Proceedings of the 40th International Conference on Machine Learning},
pages = 	 {10835--10866},
year = 	 {2023},
editor = 	 {Krause, Andreas and Brunskill, Emma and Cho, Kyunghyun and Engelhardt, Barbara and Sabato, Sivan and Scarlett, Jonathan},
volume = 	 {202},
series = 	 {Proceedings of Machine Learning Research},
month = 	 {23--29 Jul},
publisher =    {PMLR},
url = 	 {https://proceedings.mlr.press/v202/gao23h.html}
}

@inproceedings{lietal2023making,
    title = "Making Language Models Better Reasoners with Step-Aware Verifier",
    author = "Li, Yifei  and
      Lin, Zeqi  and
      Zhang, Shizhuo  and
      Fu, Qiang  and
      Chen, Bei  and
      Lou, Jian-Guang  and
      Chen, Weizhu",
    editor = "Rogers, Anna  and
      Boyd-Graber, Jordan  and
      Okazaki, Naoaki",
    booktitle = "Proceedings of the 61st Annual Meeting of the Association for Computational Linguistics (Volume 1: Long Papers)",
    month = jul,
    year = "2023",
    address = "Toronto, Canada",
    publisher = "Association for Computational Linguistics",
    url = "https://aclanthology.org/2023.acl-long.291/",
    doi = "10.18653/v1/2023.acl-long.291",
    pages = "5315--5333"
}

@misc{faria2025sampledontsearchrethinking,
  title         = {Sample, Don't Search: Rethinking Test-Time Alignment for Language Models},
  author        = {Gon\c{c}alo Faria and Noah A. Smith},
  year          = {2025},
  eprint        = {2504.03790},
  archiveprefix = {arXiv},
  primaryclass  = {cs.CL},
  url           = {https://arxiv.org/abs/2504.03790}
}

@inproceedings{mitchell2024emulator,
  title         = {An Emulator for Fine-tuning Large Language Models using Small Language Models},
  author        = {Eric Mitchell and Rafael Rafailov and Archit Sharma and Chelsea Finn and Christopher D Manning},
  booktitle     = {The Twelfth International Conference on Learning Representations},
  year          = {2024},
  url           = {https://openreview.net/forum?id=Eo7kv0sllr}
}

@inproceedings{liu2024tuningbyproxy,
  title         = {Tuning Language Models by Proxy},
  author        = {Alisa Liu and Xiaochuang Han and Yizhong Wang and Yulia Tsvetkov and Yejin Choi and Noah A. Smith},
  booktitle     = {First Conference on Language Modeling},
  year          = {2024},
  url           = {https://openreview.net/forum?id=dribhnhm1i}
}

@inproceedings{li2025rewardshifted,
    title = "Reward-Shifted Speculative Sampling Is An Efficient Test-Time Weak-to-Strong Aligner",
    author = "Li, Bolian  and
      Wu, Yanran  and
      Luo, Xinyu  and
      Zhang, Ruqi",
    editor = "Christodoulopoulos, Christos  and
      Chakraborty, Tanmoy  and
      Rose, Carolyn  and
      Peng, Violet",
    booktitle = "Proceedings of the 2025 Conference on Empirical Methods in Natural Language Processing",
    month = nov,
    year = "2025",
    address = "Suzhou, China",
    publisher = "Association for Computational Linguistics",
    url = "https://aclanthology.org/2025.emnlp-main.578/",
    doi = "10.18653/v1/2025.emnlp-main.578",
    pages = "11468--11478",
    ISBN = "979-8-89176-332-6"
}

@inproceedings{fei2025nudging,
    title = "Nudging: Inference-time Alignment of {LLM}s via Guided Decoding",
    author = "Fei, Yu  and
      Razeghi, Yasaman  and
      Singh, Sameer",
    editor = "Che, Wanxiang  and
      Nabende, Joyce  and
      Shutova, Ekaterina  and
      Pilehvar, Mohammad Taher",
    booktitle = "Proceedings of the 63rd Annual Meeting of the Association for Computational Linguistics (Volume 1: Long Papers)",
    month = jul,
    year = "2025",
    address = "Vienna, Austria",
    publisher = "Association for Computational Linguistics",
    url = "https://aclanthology.org/2025.acl-long.623/",
    doi = "10.18653/v1/2025.acl-long.623",
    pages = "12702--12739",
    ISBN = "979-8-89176-251-0"
}

@ARTICLE{chow1970errorreject,
author={Chow, C.},
journal={IEEE Transactions on Information Theory}, 
title={On optimum recognition error and reject tradeoff}, 
year={1970},
volume={16},
number={1},
pages={41-46},
doi={10.1109/TIT.1970.1054406}
}

@inproceedings{dinh2025generativeunderconfident,
    title = "Are Generative Models Underconfident? Better Quality Estimation with Boosted Model Probability",
    author = "Dinh, Tu Anh  and
      Niehues, Jan",
    editor = "Christodoulopoulos, Christos  and
      Chakraborty, Tanmoy  and
      Rose, Carolyn  and
      Peng, Violet",
    booktitle = "Proceedings of the 2025 Conference on Empirical Methods in Natural Language Processing",
    month = nov,
    year = "2025",
    address = "Suzhou, China",
    publisher = "Association for Computational Linguistics",
    url = "https://aclanthology.org/2025.emnlp-main.166/",
    doi = "10.18653/v1/2025.emnlp-main.166",
    pages = "3364--3382",
    ISBN = "979-8-89176-332-6"
}

@misc{olmo20252olmo2furious,
title={2 {OLMo} 2 Furious}, 
author={Team OLMo and Pete Walsh and Luca Soldaini and Dirk Groeneveld and Kyle Lo and Shane Arora and Akshita Bhagia and Yuling Gu and Shengyi Huang and Matt Jordan and Nathan Lambert and Dustin Schwenk and Oyvind Tafjord and Taira Anderson and David Atkinson and Faeze Brahman and Christopher Clark and Pradeep Dasigi and Nouha Dziri and Michal Guerquin and Hamish Ivison and Pang Wei Koh and Jiacheng Liu and Saumya Malik and William Merrill and Lester James V. Miranda and Jacob Morrison and Tyler Murray and Crystal Nam and Valentina Pyatkin and Aman Rangapur and Michael Schmitz and Sam Skjonsberg and David Wadden and Christopher Wilhelm and Michael Wilson and Luke Zettlemoyer and Ali Farhadi and Noah A. Smith and Hannaneh Hajishirzi},
year={2025},
eprint={2501.00656},
archivePrefix={arXiv},
primaryClass={cs.CL},
url={https://arxiv.org/abs/2501.00656}, 
}

@misc{yang2025qwen3technicalreport,
title={Qwen3 Technical Report}, 
author={An Yang and Anfeng Li and Baosong Yang and Beichen Zhang and Binyuan Hui and Bo Zheng and Bowen Yu and Chang Gao and Chengen Huang and Chenxu Lv and Chujie Zheng and Dayiheng Liu and Fan Zhou and Fei Huang and Feng Hu and Hao Ge and Haoran Wei and Huan Lin and Jialong Tang and Jian Yang and Jianhong Tu and Jianwei Zhang and Jianxin Yang and Jiaxi Yang and Jing Zhou and Jingren Zhou and Junyang Lin and Kai Dang and Keqin Bao and Kexin Yang and Le Yu and Lianghao Deng and Mei Li and Mingfeng Xue and Mingze Li and Pei Zhang and Peng Wang and Qin Zhu and Rui Men and Ruize Gao and Shixuan Liu and Shuang Luo and Tianhao Li and Tianyi Tang and Wenbiao Yin and Xingzhang Ren and Xinyu Wang and Xinyu Zhang and Xuancheng Ren and Yang Fan and Yang Su and Yichang Zhang and Yinger Zhang and Yu Wan and Yuqiong Liu and Zekun Wang and Zeyu Cui and Zhenru Zhang and Zhipeng Zhou and Zihan Qiu},
year={2025},
eprint={2505.09388},
archivePrefix={arXiv},
primaryClass={cs.CL},
url={https://arxiv.org/abs/2505.09388}, 
}

@misc{cobbe2021trainingverifierssolvemath,
title={Training Verifiers to Solve Math Word Problems}, 
author={Karl Cobbe and Vineet Kosaraju and Mohammad Bavarian and Mark Chen and Heewoo Jun and Lukasz Kaiser and Matthias Plappert and Jerry Tworek and Jacob Hilton and Reiichiro Nakano and Christopher Hesse and John Schulman},
year={2021},
eprint={2110.14168},
archivePrefix={arXiv},
primaryClass={cs.LG},
url={https://arxiv.org/abs/2110.14168}, 
}

@misc{lightman2023letsverifystepstep,
title={Let's Verify Step by Step}, 
author={Hunter Lightman and Vineet Kosaraju and Yura Burda and Harri Edwards and Bowen Baker and Teddy Lee and Jan Leike and John Schulman and Ilya Sutskever and Karl Cobbe},
year={2023},
eprint={2305.20050},
archivePrefix={arXiv},
primaryClass={cs.LG},
url={https://arxiv.org/abs/2305.20050}, 
}

@inproceedings{patel2021svamp,
title = "Are {NLP} Models really able to Solve Simple Math Word Problems?",
author = "Patel, Arkil  and
Bhattamishra, Satwik  and
Goyal, Navin",
booktitle = "Proceedings of the 2021 Conference of the North American Chapter of the Association for Computational Linguistics: Human Language Technologies",
month = jun,
year = "2021",
address = "Online",
publisher = "Association for Computational Linguistics",
url = "https://aclanthology.org/2021.naacl-main.168",
doi = "10.18653/v1/2021.naacl-main.168",
pages = "2080--2094",
}

@article{allenai2018arc,
author    = {Peter Clark  and Isaac Cowhey and Oren Etzioni and Tushar Khot and
             Ashish Sabharwal and Carissa Schoenick and Oyvind Tafjord},
title     = {Think you have Solved Question Answering? Try ARC, the AI2 Reasoning Challenge},
journal   = {arXiv:1803.05457v1},
year      = {2018},
}

@inproceedings{talmor2019commonsenseqa,
title = "{C}ommonsense{QA}: A Question Answering Challenge Targeting Commonsense Knowledge",
author = "Talmor, Alon  and
Herzig, Jonathan  and
Lourie, Nicholas  and
Berant, Jonathan",
booktitle = "Proceedings of the 2019 Conference of the North {A}merican Chapter of the Association for Computational Linguistics: Human Language Technologies, Volume 1 (Long and Short Papers)",
month = jun,
year = "2019",
address = "Minneapolis, Minnesota",
publisher = "Association for Computational Linguistics",
url = "https://aclanthology.org/N19-1421",
doi = "10.18653/v1/N19-1421",
pages = "4149--4158",
archivePrefix = "arXiv",
eprint        = "1811.00937",
primaryClass  = "cs",
}

@inproceedings{lin2022truthfulqa,
title = "{T}ruthful{QA}: Measuring How Models Mimic Human Falsehoods",
author = "Lin, Stephanie  and
Hilton, Jacob  and
Evans, Owain",
editor = "Muresan, Smaranda  and
Nakov, Preslav  and
Villavicencio, Aline",
booktitle = "Proceedings of the 60th Annual Meeting of the Association for Computational Linguistics (Volume 1: Long Papers)",
month = may,
year = "2022",
address = "Dublin, Ireland",
publisher = "Association for Computational Linguistics",
url = "https://aclanthology.org/2022.acl-long.229/",
doi = "10.18653/v1/2022.acl-long.229",
pages = "3214--3252"
}

@inproceedings{
rashid2025a,
title={A Critical Look At Tokenwise Reward-Guided Text Generation},
author={Ahmad Rashid and Ruotian Wu and Julia Grosse and Agustinus Kristiadi and Pascal Poupart},
booktitle={Second Conference on Language Modeling},
year={2025},
url={https://openreview.net/forum?id=Vnw9c1YLhV}
}

@inproceedings{
lambert2025tulu,
title={Tulu 3: Pushing Frontiers in Open Language Model Post-Training},
author={Nathan Lambert and Jacob Morrison and Valentina Pyatkin and Shengyi Huang and Hamish Ivison and Faeze Brahman and Lester James Validad Miranda and Alisa Liu and Nouha Dziri and Xinxi Lyu and Yuling Gu and Saumya Malik and Victoria Graf and Jena D. Hwang and Jiangjiang Yang and Ronan Le Bras and Oyvind Tafjord and Christopher Wilhelm and Luca Soldaini and Noah A. Smith and Yizhong Wang and Pradeep Dasigi and Hannaneh Hajishirzi},
booktitle={Second Conference on Language Modeling},
year={2025},
url={https://openreview.net/forum?id=i1uGbfHHpH}
}

@inproceedings{
narasimhan2025fastercascades,
title={Faster Cascades via Speculative Decoding},
author={Harikrishna Narasimhan and Wittawat Jitkrittum and Ankit Singh Rawat and Seungyeon Kim and Neha Gupta and Aditya Krishna Menon and Sanjiv Kumar},
booktitle={The Thirteenth International Conference on Learning Representations},
year={2025},
url={https://openreview.net/forum?id=vo9t20wsmd}
}

@inproceedings{hammal-etal-2026-kad,
    title = "{KAD}: A Framework for Proxy-based Test-time Alignment with Knapsack Approximation Deferral",
    author = "Hammal, Ayoub  and
      Zweigenbaum, Pierre  and
      Corro, Caio",
    editor = "Demberg, Vera  and
      Inui, Kentaro  and
      Marquez, Llu{\'i}s",
    booktitle = "Proceedings of the 19th Conference of the {E}uropean Chapter of the {A}ssociation for {C}omputational {L}inguistics (Volume 1: Long Papers)",
    month = mar,
    year = "2026",
    address = "Rabat, Morocco",
    publisher = "Association for Computational Linguistics",
    url = "https://aclanthology.org/2026.eacl-long.179/",
    doi = "10.18653/v1/2026.eacl-long.179",
    pages = "3854--3872",
    ISBN = "979-8-89176-380-7"
}

\clearpage
\appendix

\section{Proof of Proposition \ref{prop:implicit_reward_condition}}
\label{app:proof_implicit_reward_condition}

\begin{proof}
    Let $ \alpha \in [0, 1]$ such that the condition in Equation~\ref{eq:alpha_condition} holds.
    For all $v \in V$, we write:
    \begin{align}
        \pi(\rr = 1~|~\bar\rx=v)
        &= \frac{\evp_v + \evq^\ast_v \alpha - \evs_v}{\evp_v}
        \nonumber \\
        &= \mu_v. \label{eq:rejection_prob_implicit_reward}
    \end{align}
    We must first prove that $\pi(\rr~|~\bar\rx)$ is a valid distribution,
    that is $\mu_v \in [0, 1]$, $\forall v \in V$.

    First, we have
    \begin{align*}
        &&\frac{\evp_v + \evq^\ast_v \alpha - \evs_v}{\evp_v}
        &\leq 1
        \\
        &\iff& \evp_v + \evq^\ast_v \alpha - \evs_v
        &\leq \evp_v
        \\
        &\iff&  \alpha
        &\leq \frac{\evs_v}{\evq^\ast_v},
    \end{align*}
    which is true based on our assumptions on $\alpha$.
    Second, we have
    \begin{align*}
        && \frac{\evp_v + \evq^\ast_v \alpha - \evs_v}{\evp_v}
        &\geq 0
        \\
        &\iff&  \evp_v + \evq^\ast_v \alpha - \evs_v
        &\geq 0
        \\
        &\iff&  \alpha
        &\geq \frac{\evs_v - \evp_v}{\evq^\ast_v},
    \end{align*}
    which is also true based on our assumptions on $\alpha$.
    
    We can now compute the resulting PGM distribution. By definition, we have:
    \begin{align*}
        &\pi(\rx = v)
        = \evp_v (1 - \mu_v) + \evq^\ast_v \sum_w \evp_w \mu_w.
\intertext{
Developing the rejection probability $\mu_v$ as per Equation \ref{eq:rejection_prob_implicit_reward}, we obtain:
}
        &\quad= \evp_v \left ( 1 - \frac{\evp_v + \evq^\ast_v \alpha - \evs_v}{\evp_v} \right )
        \\
        &\qquad \qquad + \evq^\ast_v \sum_w \evp_w \frac{\evp_w + \evq^\ast_w \alpha - \evs_w}{\evp_w}
        \\
        &\quad= \evp_v - \evp_v - \evq^\ast_v \alpha + \evs_v
        \\
        &\qquad \qquad + \evq^\ast_v \left ( \sum_w \evp_w + \sum_w \evq^\ast_w \alpha - \sum_w \evs_w \right )
        \\
        &\quad= \evs_v - \evq^\ast_v \alpha + \evq^\ast_v \left ( 1 + \alpha - 1 \right )
        \\
        &\quad= \evs_v - \evq^\ast_v \alpha + \evq^\ast_v  \alpha
        \\
        &\quad= \evs_v,
    \end{align*}
    which ends the proof.
\end{proof}

\section{Dataset examples} \label{app:dataset_examples}

Following previous work, we rely on regular expressions to extract the final response form the LLM generation.
In the case of GSM8K and SVAMP, we extract the last number in the model's response.
In the case of MATH500, we extract the last \texttt{\textbackslash boxed\{\}} number in the model's response.
In the multiple-choice question tasks, we extract the last capital letter from \texttt{A} to \texttt{J} in the response.

The models are evaluated on the question prompts alone, with no additional few-shot examples.
For each task, we use the extracted answer's accuracy to evaluate the model's performance.
We also evaluate the generation throughput as the average number of tokens generated by the model each second.

The following system prompt was used in all tasks:
{
\footnotesize
\begin{verbatim}
Answer the question by walking through
the reasoning step by step.
\end{verbatim}
}

\paragraph{MATH500.}
Here is an example of a question/answer from MATH500 dataset:
{
\footnotesize
\begin{verbatim}
{
  "role": "user",
  "content": "Convert the point $(0,3)$ in
  rectangular coordinates to polar coordinates.
  Enter your answer in the  form $(r,\\theta),$
  where $r > 0$ and $0 \\le \\theta < 2 \\pi.$
  \n\nPresent the answer in LaTex format:
  \\boxed{Your answer}"
},
{
  "role": "assistant",
  "content": "We have that $r = \\sqrt{0^2
  + 3^2} = 3.$ Also, if we draw the line
  connecting the origin and $(0,3),$ this line
  makes an angle of $\\frac{\\pi}{2}$ with the
  positive $x$-axis.\n\n[asy]\nunitsize(0.8 cm);
  \n\ndraw((-0.5,0)--(3.5,0));\ndraw((0,-0.5)
  --(0,3.5)); \ndraw(arc((0,0),3,0,90),red,
  Arrow(6));\n\ndot((0,3), red);\nlabel(\"$(0,3)$
  \", (0,3), W);\ndot((3,0), red); \n[/asy]
  \n\nTherefore, the polar coordinates are
  $\\boxed{\\left( 3, \\frac{\\pi}{2} \\right)}.$"
}
\end{verbatim}
}
The following instruction is appended at the end of the user's query :
{
\footnotesize
\begin{verbatim}
\n\nPresent the answer in LaTex format:
\\boxed{{Your answer}}
\end{verbatim}
}

\paragraph{GSM8K.}
Here is an example of a question/answer from GSM8K dataset:
{
\footnotesize
\begin{verbatim}
{
  "role": "user",
  "content": "Janet\u2019s ducks lay 16 eggs per day.
  She eats three for breakfast every morning and
  bakes muffins for her friends every day with four.
  She sells the remainder at the farmers' market
  daily for $2 per fresh duck egg. How much in
  dollars does she make every day at the farmers'
  market?"
},
{
  "role": "assistant",
  "content": "How many eggs does Janet sell? ** Janet
  sells 16 - 3 - 4 = <<16-3-4=9>>9 duck eggs a day.
  \nHow much does Janet make at the farmers' market?
  ** She makes 9 * 2 = $<<9*2=18>>18 every day at the
  farmer\u2019s market.
  \n#### 18"
}
\end{verbatim}
}

\paragraph{ARC.}
Here is an example of a question/answer from ARC dataset:
{
\footnotesize
\begin{verbatim}
{
  "role": "user",
  "content": "Choose the correct answer to the
  following multiple-choice question.
  \n\nQuestion: An astronomer observes that a
  planet rotates faster after a meteorite impact.
  Which is the most likely effect of this increase
  in rotation?\n\nA). Planetary density will
  decrease.\nB). Planetary years will become longer.
  \nC). Planetary days will become shorter.
  \nD). Planetary gravity will become stronger.
  \n\nProvide your reasoning about the answer and
  finish your answer with the letter corresponding
  to the correct option (e.g., A, B, C, or D).\n\n"
},
{
  "role": "assistant",
  "content": "\nAnswer: C\n\n"
}
\end{verbatim}
}

The following prefix is prepended to the user's query
{
\footnotesize
\begin{verbatim}
Choose the correct answer to the following
multiple-choice question.\n\n
\end{verbatim}
}
The following suffix is appended to the user's query
{
\footnotesize
\begin{verbatim}
\nProvide your reasoning about the answer and finish
your answer with the letter corresponding to the
correct option (e.g., A, B, C, or D).\n\n
\end{verbatim}
}

\end{document}